\newtheorem{thm}{Theorem}
\newtheorem{lemma}{Lemma}
\newtheorem{ass}{Assumption}
\def \E {\mathrm{E}}
\def \x {\mathbf{x}}
\def \g {\mathbf{g}}
\def \D {\mathbf{D}}
\def \u {\mathbf{u}}
\def \w {\mathbf{w}}
\def \R {\mathbb{R}}
\def \A {\mathcal{A}}
\def \B {\mathcal{B}}
\def \q {\mathbf{q}}
\def \p {\mathbf{p}}
\def \q {\mathbf{q}}
\def \h {\mathbf{h}}
\def \X {\mathcal{X}}
\def \P {\mathcal{P}}
\def \Q {\mathcal{Q}}
\def \w {\mathbf{w}}
\def \E {\mathrm{E}}
\def \g {\mathbf{g}}
\def \R {\mathbb{R}}
\def \u {\mathbf{u}}
\def \x {\mathbf{x}}
\def \y {\mathbf{y}}
\def \yh {\widehat{\y}}
\def \X {\mathcal{X}}
\def \Y {\mathcal{Y}}
\def \Alg {Dash}
\begin{document}
\title{\bf Dash: Semi-Supervised Learning with Dynamic Thresholding}
\author[1]{Yi Xu \thanks{yixu@alibaba-inc.com}}
\author[1]{Lei Shang \thanks{sl172005@alibaba-inc.com}}
\author[1]{Jinxing Ye \thanks{zhengze.yjx@alibaba-inc.com}}
\author[1]{Qi Qian \thanks{qi.qian@alibaba-inc.com}}
\author[2]{Yu-Feng Li \thanks{liyf@nju.edu.cn}}
\author[1]{Baigui Sun \thanks{baigui.sbg@alibaba-inc.com}}
\author[1]{Hao Li \thanks{lihao.lh@alibaba-inc.com}}
\author[1]{Rong Jin \thanks{jinrong.jr@alibaba-inc.com}}
\affil[1]{Machine Intelligence Technology, Alibaba Group}
\affil[2]{National Key Laboratory for Novel Software Technology, Nanjing University}
\date{\today}
\maketitle

\begin{abstract}
   While semi-supervised learning (SSL) has received tremendous attentions in many machine learning tasks due to its successful use of unlabeled data, existing SSL algorithms use either all unlabeled examples or the unlabeled examples with a fixed high-confidence prediction during the training progress. However, it is possible that too many correct/wrong pseudo labeled examples are eliminated/selected. In this work we develop a simple yet powerful framework, whose key idea is to select a subset of training examples from the unlabeled data when performing existing SSL methods so that only the unlabeled examples with pseudo labels related to the labeled data will be used to train models. The selection is performed at each updating iteration by only keeping the examples whose losses are smaller than a given threshold that is dynamically adjusted through the iteration. Our proposed approach, Dash, enjoys its adaptivity in terms of unlabeled data selection and its theoretical guarantee. Specifically, we theoretically establish the convergence rate of Dash from the view of non-convex optimization. Finally, we empirically demonstrate the effectiveness of the proposed method in comparison with state-of-the-art over benchmarks. 
\end{abstract}

\section{Introduction}
In spite of successful use in a variety of classification and regression tasks, supervised learning requires large amount of \emph{labeled} training data. In many machine learning applications, labeled data can be significantly more costly, time-consuming and difficult to obtain than the \emph{unlabeled} data~\citep{zhu2005semi}, since they usually require experienced human labors from experts (e.g., a doctor in detection of covid-19 by using X-ray images). Typically, only a small amount of labeled data is available, but there is a huge amount of data without label. This is one of key hurdles in the development and deployment of machine learning models. 

Semi-supervised learning (SSL) is designed to improve learning performance by leveraging an abundance of unlabeled data along with limited labeled data~\citep{chapelle2006semi}. In much recent work, SSL can be categorized into several main classes in terms of the use of unlabeled data: consistency regularization, pseudo labeling, generic regularization (e.g., large margin regularization, Laplacian regularization, etc~\citep{chapelle2006semi}), and their combinations. With the image data augmentation technique, consistency regularization uses unlabeled data~\citep{baird1992document,schmidhuber2015deep} based on the condition that the model predictions between different perturbed versions of the same image are similar. Another line of work is to produce artificial label for unlabeled data based on prediction model and add them to 
the training data set. With different approaches of artificial label production, varies of SSL methods have been proposed in the literature including self-training~\citep{yarowsky1995unsupervised, lee2013pseudo,rosenberg2005semi, sajjadi2016regularization, laine2017temporal, xie2020self} and co-training~~\citep{blum1998combining, zhou2005tri, sindhwani2008rkhs, wang2008random, yu2008bayesian, wang2010new,  chen2011automatic}. Due to its capability to handle both labeled data and unlabeled data, SSL has been widely studied in diverse machine learning tasks such as image classification~\citep{sajjadi2016regularization,laine2017temporal, tarvainen2017mean, xie2020unsupervised, berthelot2019mixmatch, berthelot2019remixmatch}, natural language processing~\citep{turian2010word}, speech recognition~\citep{yu2010active}, and object detection~\citep{misra2015watch}. 

\begin{figure*}[t]
    \centering
    \subfigure[Number of selected unlabeled examples with correct pseudo labels]{\includegraphics[width=0.45\textwidth]{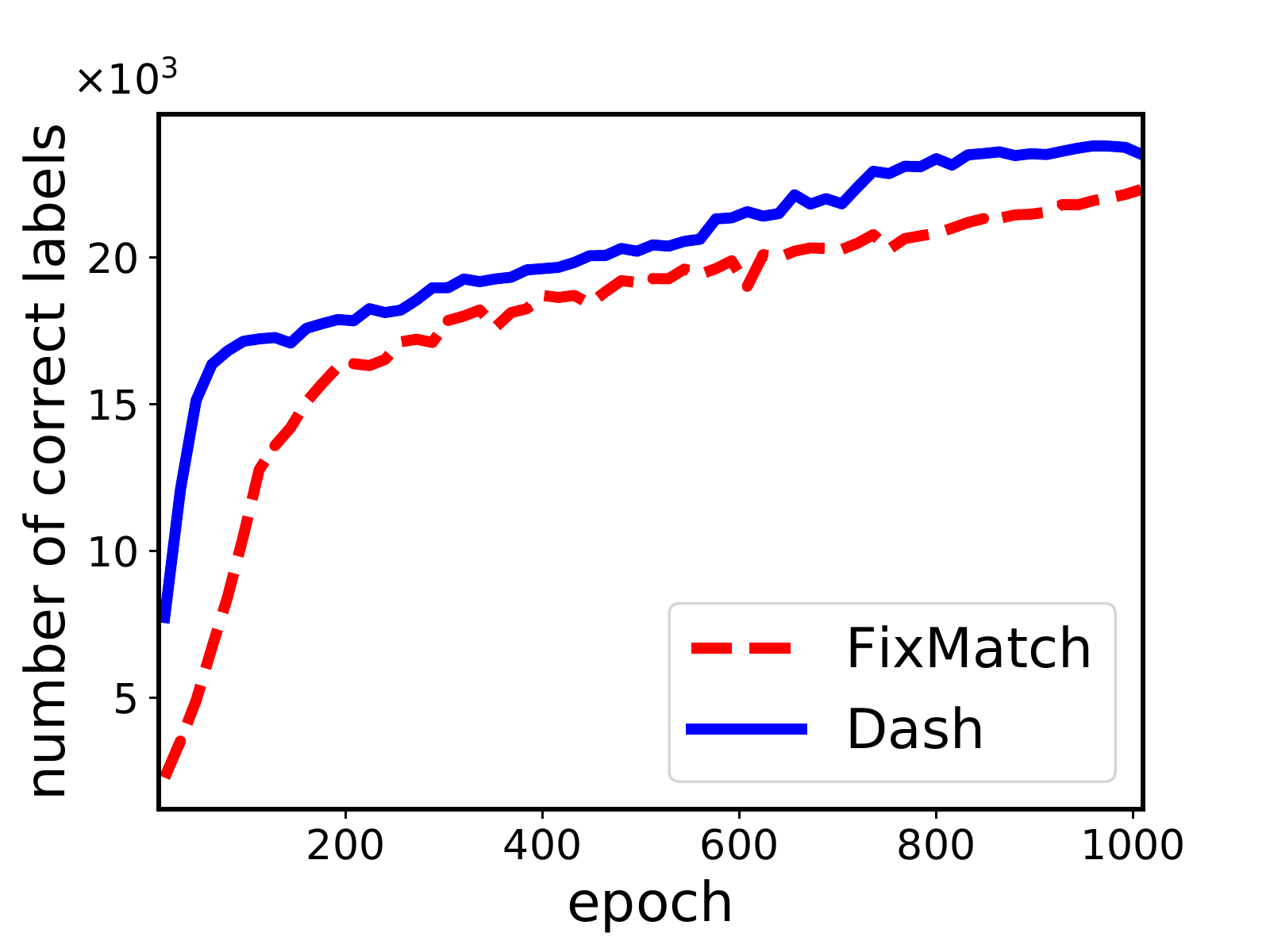}}
    \subfigure[Number of selected unlabeled examples with wrong pseudo labels]{\includegraphics[width=0.45\textwidth]{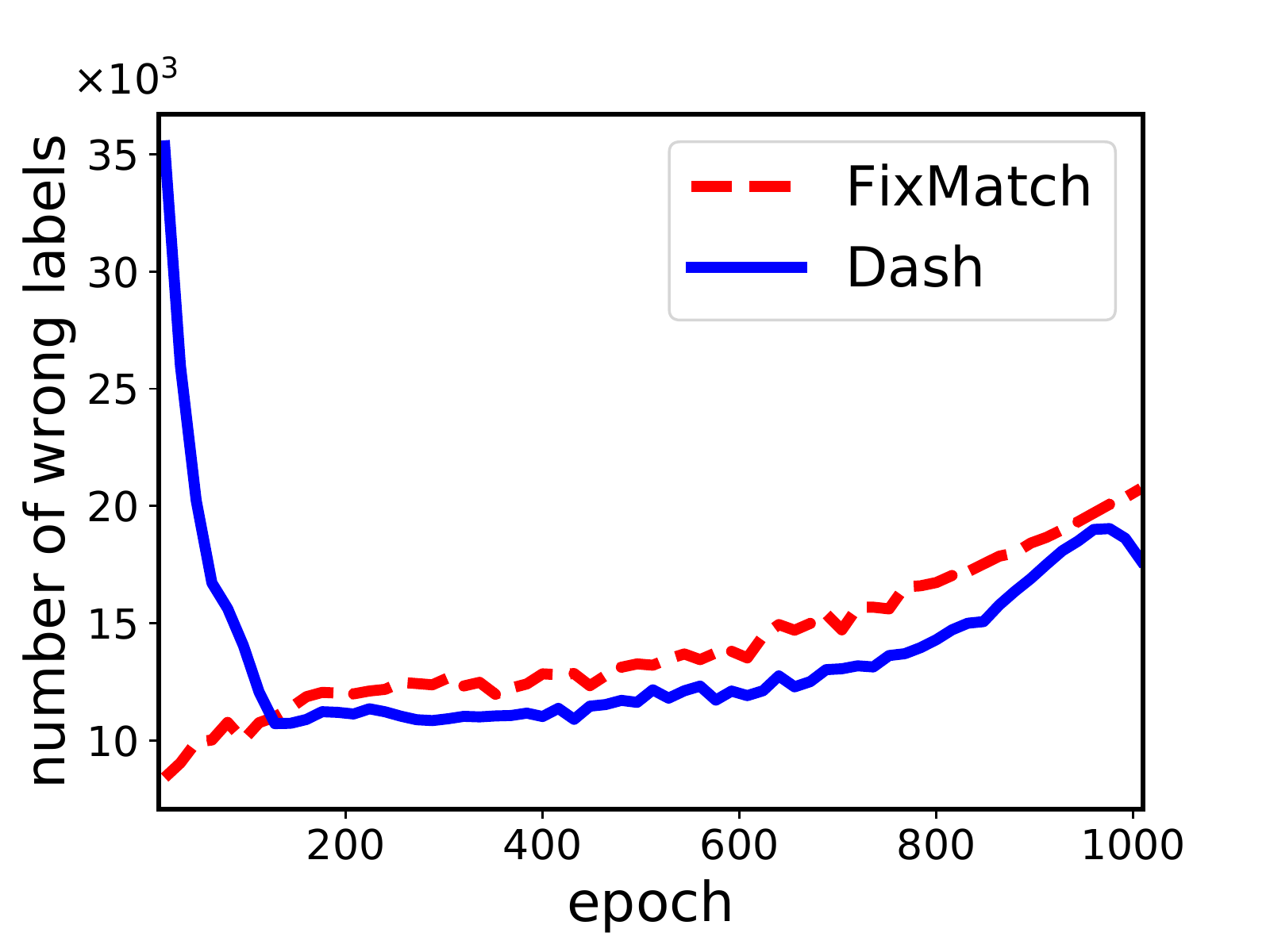}}
    \caption{An example of experimental results on Wide ResNet-28-8 for CIFAR-100 with 400 labeled images illustrates the reason of dynamically selecting unlabeled data to train learning models. Pseudo labels are generated based on the prediction models. FixMatch selects unlabeled example if its confidence prediction is greater than $0.95$, while the proposed Dash algorithm selects unlabeled example based on a dynamic threshold through optimization iterations. (a) The proposed Dash selects more examples with correct pseudo labels than that of FixMatch. (b) The proposed Dash maintains much more examples with wrong pseudo labels at the beginning but it will drop off more examples with wrong pseudo labels after several epochs, comparing to FixMatch.}\label{fig:dash}
\end{figure*}
Numerous empirical evidences show that unlabeled data in SSL can help to improve the learning performance~\citep{berthelot2019mixmatch,berthelot2019remixmatch,sohn2020fixmatch}, however, a series of theoretical studies~\citep{ben2008does,singh2009unlabeled,li2011towards, balcan2005pac} have demonstrated that this success is highly relying on a necessary condition that labeled data and unlabeled data with pseudo label come from the same distribution during the training process~\citep{zhu2005semi,van2020survey}.
Unfortunately, it has been shown that this condition does not always hold in real applications and thus it could hurt the performance~\citep{li2017learning,oliver2018realistic}. For example, the pseudo label of an unlabeled example generated by conventional SSL methods during the training progress is not correct~\citep{hataya2019unifying, li2020dividemix}. In this case, the degradation of model performance has been observed when using unlabeled data compared to the simple supervised learning model not using any unlabeled data at all~\citep{chapelle2006semi,oliver2018realistic}. Thus, {not all unlabeled data are needed in SSL}.

To improve SSL performance, multiple studies~\citep{guosafe2020,ren2020not} examined the strategies of weighting different training unlabeled examples by solving a bi-level optimization problem. It is also a popular idea to select a subset of training examples from unlabeled examples for SSL. For example, FixMatch~\citep{sohn2020fixmatch} uses the unlabeled examples with a fixed high-confidence prediction (e.g., $0.95$) in classification tasks using cross entropy loss. However, the fixed threshold may lead to eliminate too many unlabeled examples with correct pseudo labels (see Figure~\ref{fig:dash} (a)) and may lead to select too many unlabeled examples with wrong pseudo labels (see Figure~\ref{fig:dash} (b)). That is to say, the fixed  threshold is possible not good enough during the training progress and thus it could degrade the overall performance.

Unlike the previous work, we aim to the proposed approach enjoys its adaptivity in terms of unlabeled data selection and its theoretical guarantee. This inspires us to consider answering the following question in this study. \begin{center}
    {\bf Can we design a provable SSL algorithm that selects unlabeled data with dynamic thresholding?}
\end{center}

To this end, we propose a generic SSL algorithm with {\bf D}yn{\bf a}mic Thre{\bf sh}olding (\Alg) that can dynamically select unlabeled data during the training process. Specifically, \Alg~firstly runs over labeled data and obtains a threshold for unlabeled data selection. It then selects the unlabeled data whose loss values are smaller than the threshold to the training data-set. The value of threshold is gradually decreased over the optimization iterations. It can be integrated with existing SSL methods like FixMatch~\citep{sohn2020fixmatch}. From the view of optimization, we show that eventually the proposed \Alg~can non-asymptotically converge with theoretical guarantee. Empirical evaluations on image benchmarks validate the effectiveness of \Alg~comparing with the state-of-the-art SSL algorithms. 
 
\section{Related Work}
There has been growing interest in semi-supervised learning for training machine learning and deep learning~\citep{flach2012machine,goodfellow2016deep}. A number of SSL methods have been studied by leveraging the structure of unlabeled data including consistency regularization~\citep{bachman2014learning,sajjadi2016regularization,laine2017temporal,tarvainen2017mean,miyato2018virtual,xie2020unsupervised}, entropy minimization~\citep{grandvalet2005semi,lee2013pseudo}, and other interesting approaches~\citep{berthelot2019mixmatch,berthelot2019remixmatch}. In addition, several studies on SSL have been proposed to keep SSL performing safe when using unlabeled data, which is known as safe SSL~\citep{li2014towards}. An non-exhaustive list of those studies include~\citep{cozman2003semi,singh2009unlabeled,li2011improving,balsubramani2015optimally, loog2015contrastive,li2017learning,krijthe2017projected,li2019towards,mey2019improvability,guosafe2020}. For example, \cite{ren2020not} proposed a new SSL framework that uses an individual weight for each unlabeled example, and it updates the individual weights and models iteratively by solving a bi-level optimization problem approximately. In this paper, we mainly focus on improved deep SSL methods with the use of unlabeled data selection. Comprehensive surveys on SSL methods could be refer to~\citep{zhu2005semi,chapelle2006semi,zhu2009introduction,hady2013semi,van2020survey}.

The use of unlabeled data selection by a threshold is not new in the literature of SSL. As a simple yet widely used heuristic algorithm, pseudo-labeling~\citep{lee2013pseudo} (a.k.a. self-training~\citep{mclachlan1975iterative,yarowsky1995unsupervised,rosenberg2005semi,sajjadi2016regularization,laine2017temporal,xie2020self}) uses the prediction model itself to generate pseudo labels for unlabeled images. Then the unlabeled images whose corresponding pseudo label's highest class probability is larger than a predefined threshold will be used for the training. Nowadays, pseudo-labeling has been become an important component of many modern SSL methods~\citep{xie2020unsupervised,sohn2020fixmatch}. 

With the use of weak and strong data augmentations, several recent works such as UDA~\citep{xie2020unsupervised}, ReMixMatch~\citep{berthelot2019remixmatch} and FixMatch~\citep{sohn2020fixmatch} have been proposed in image classification problems. Generally, they use a weakly-augmented~\footnote{Both UDA and ReMixMatch use crop and flip as ``weak" augmentation while FixMatch uses flip and shift.} unlabeled image to generate a pseudo label and enforce consistency against strongly-augmented~\footnote{For ``strong" augmentation, UDA uses RandAugment~\citep{cubuk2020randaugment}, ReMixMatch uses CTAugment~\citep{cubuk2019autoaugment}, and FixMatch uses both.} version of the same image.  In particular, UDA and FixMatch use a fixed threshold to retain the unlabeled example whose highest probability in the predicted class distribution for the pseudo label is higher than the threshold. For example, UDA sets this threshold to be 0.8 for CIFAR-10 and SVHN, and FixMatch sets the threshold to be 0.95 for all data-sets. To encourage the model to generate high-confidence predictions, UDA and ReMixMatch sharpen the guessed label distribution by adjusting its temperature and then re-normalize the distribution. \cite{sohn2020fixmatch} have shown that the sharpening and thresholding pseudo-labeling have a similar effect.

By contrast, the proposed \Alg~method selects a subset of unlabeled data to be used in training models by a data-dependent dynamic threshold, and its theoretical convergence guarantee is established for stochastic gradient descent under the non-convex setting, which is applicable to deep learning.  

\section{Preliminary and Background}
\subsection{Problem Setting}
We study the task of learning a model to map an input $\x \in \X\subseteq\R^d$ onto a label $\y \in \Y$. In many machine learning applications, $\x$ refers to the feature and $\y\in\Y$ refers to the label for classification or regression. For simplicity, let $\xi$ denote the input-label pair $(\x,\y)$, i.e. $\xi := (\x,\y)$. We denote by $\P$ the underlying distribution of data pair $\xi$, then $\xi \sim\P$. The goal is to learn a model $\w\in\R^d$ via minimizing an optimization problem whose objective function $F(\w)$ is the expectation of random loss function $f(\w;\xi)$:
\begin{align}\label{prob:eq:labeled}
    \min_{\w\in\R^d}F(\w) := \E_{\xi\sim \P}\left[f\left(\w; \xi \right)\right],
\end{align}
where $\E_\xi[\cdot]$ is an expectation taking over random variable $\xi\sim\P$. The optimization problem (\ref{prob:eq:labeled}) covers most machine learning and deep learning applications. 
In this paper, we consider the classification problem with $K$-classes, whose loss function is the cross-entropy loss given by
\begin{align}\label{app:loss:CE}
   f(\w;\xi_i) = H(\y_i, \p(\w;\x_i))
   :=\sum_{k=1}^{K} -y_{i,k}\log\left(\frac{\exp(p_k(\w;\x_i))}{\sum_{j=1}^{K}\exp(p_j(\w;\x_i))}\right),
\end{align}
where $\p(\w;\x)$ is the prediction function and $H(\q,\p)$ is the cross-entropy between $\q$ and $\p$. 
In this paper, we do not require the function $f(\w;\xi)$ to be convex in terms of $\w$, which is applicable to various deep learning tasks.   

In SSL, it consists of labeled examples and unlabeled examples. Let
\begin{align}
\D_l := \left\{(\x_i, \y_i), i=1, \ldots, N_l\right\}    
\end{align}
be the {labeled} training data. Given {unlabeled} training examples $\{\x_i^u, i = 1, 2, \dots, N_u\}$, one can generate pseudo label $\yh_i^u$ based on the predictions of a supervised model on labeled data. Different SSL methods such as pseudo-labeling~\citep{lee2013pseudo},  Adversarial Training~\citep{miyato2018virtual}, UDA~\citep{xie2020unsupervised},  
and FixMatch~\citep{sohn2020fixmatch} have been proposed to generate pseudo labels. We denote by 
\begin{align}
\D_u := \left\{(\x_i^u, \yh_i^u), i=1, \ldots, N_u\right\}   
\end{align}
the unlabeled data, where $\yh_i^u \in \Y$ is the pseudo label. Although it contains pseudo label, we still call $\D_u$ unlabeled data for simplicity in our analysis. Usually, the number of unlabeled examples is much larger than the number of labeled examples, i.e., $N_u \gg N_l$. Finally, the training data consists of labeled data $\D_l$ and unlabeled data with pseudo label $\D_u$, and thus the training loss of an SSL algorithm usually contains supervised loss $F_s$ and unsupervised loss $F_u$ with a weight $\lambda_u>0$: $F_s + \lambda_uF_u$, where $F_s$ is constructed on $\D_l$ and $F_u$ is constructed on $\D_u$. In image classification problems, $F_s$ is just the standard cross-entropy loss:
\begin{align}\label{loss:supervised}
    F_s(\w) := \frac{1}{N_l}\sum_{i=1}^{N_l} f(\w;\xi_i),
\end{align}
where $\xi_i \in\D_l$ and $f$ is defined in (\ref{app:loss:CE}). Thus, different constructions of the unsupervised loss $F_u$ lead to different SSL methods. Typically, there are two ways of constructing $F_u$: one is to use pseudo labels to formulate a  ``supervised" loss such as cross-entropy loss (e.g., FixMatch), and another one is to optimize a regularization that does not depend on labels such as consistency regularization (e.g., $\Pi$-Model). Next, we will introduce a recent SSL work to interpret how to generate pseudo labels and construct unsupervised loss $F_u$.

\subsection{FixMatch: An SSL Algorithm with Fixed Thresholding} 
Due to its simplicity yet empirical success, we select FixMatch~\citep{sohn2020fixmatch} as an SSL example in this subsection. Besides, we consider FixMatch as a warm-up of the proposed algorithm, since FixMatch uses a fixed threshold to ratain unlabeled examples and it will be used as a pipeline in the proposed algorithm.

The key idea of FixMatch is to use a separate weak and strong augmentation when generating model's predicted class distribution and one-hot label in unsupervised loss. Specifically, based on a supervised model $\w$ and a weak augmentation $\alpha$, FixMatch predict the class distribution
\begin{align}\label{fixmatch:h}
    \h_i = \p(\w, \alpha(\x_i^{u}))
\end{align}
for a weakly-augmented version of a unlabeled image $\x_i^u$, where $\p(\w, \x)$ is the prediction function. Then it creates a pseudo label by
\begin{align}\label{fixmatch:y}
    \widehat \y_i^u = \arg\max(\h_i).
\end{align}
Following by~\citep{sohn2020fixmatch}, the $\arg\max$ applied to a probability distribution produces a ``one-hot" probability distribution. 
To construct the unsupervised loss, it computes the model prediction for a strong augmentation $\mathcal T$ of the same unlabeled image $\x_i^u$:
\begin{align}\label{fixmatch:p}
     \p(\w,\mathcal T(\x_i^{u})).
\end{align}
The unsupervised loss is defined as the cross-entropy between $\widehat \y_i^u$ and $\p_i $: 
\begin{align}\label{fixmatch:CE}
    H(\widehat\y_i^u,\p(\w,\mathcal T(\x_i^{u})) ).
\end{align}
 Eventually, FixMatch only uses the unlabeled examples with a high-confidence prediction by selecting based on a {\bf fixed threshold} $\tau = 0.95$. Therefore, in FixMatch the cross-entropy loss with pseudo-label and confidence for unlabeled data is given by
\begin{align}\label{fixmatch:loss:unsup}
    F_u(\w) =  \frac{1}{N_u}\sum_{i=1}^{N_u} I(\max(\h_i) \ge \tau) H(\widehat\y_i^u,\p(\w,\mathcal T(\x_i^{u})) ) ,
\end{align}
where $I(\cdot)$ is an indicator function. 

As we discussed in introduction, this fixed threshold may lead to eliminate/select too many unlabeled examples with correct/wrong pseudo labels (see Figure~\ref{fig:dash}), which eventually could drop off overall performance. It is a natural choice: the threshold is not fixed across the optimization iterations. Thus, in the next section, we are going to propose a new SSL scheme having a dynamic threshold.

\begin{algorithm*}[t]
\caption{Dash: Semi-Supervised Learning with {\bf D}yn{\bf a}mic Thre{\bf sh}olding}\label{alg:dash}
\begin{algorithmic}
\STATE Input: learning rate $\eta_0$ and mini-batch size $m_0$ for stage one, learning rate $\eta$ and parameter $m$ of mini-batch size for stage two, two parameters $C > 1$ and $\gamma > 1$ for computing threshold, and violation probability $\delta$.
\STATE {\color{gray}// Warm-up Stage: run SGD in $T_0$ iterations.}
\STATE Initialization: $\u_0 = \w_0$
\FOR{$t=0, 1, \ldots, T_0-1$}
    \STATE Sample $m_0$ examples $\xi_{t,i}$ $(i=1,\dots,m_0)$ from $\D_l$,
    \STATE $\u_{t+1} = \u_{t} - \eta_0 \tilde\g_{t}$ where $\tilde\g_{t} = \frac{1}{m_0}\sum_{i=1}^{m_0}\nabla f_s(\u_{t};\xi_{t,i})$
\ENDFOR
\STATE {\color{gray}// Selection Stage: run SGD in $T$ iterations.}
\STATE Initialization:  $\w_{1} = \u_{T_0}$.
\STATE Compute the value of $\widehat{\rho}$ as in (\ref{eqn:rho:hat}). {\color{gray}// In practice, $\widehat{\rho}$ can be obtained as in (\ref{eqn:rho:0}).} 
\FOR{$t=1, \ldots, T$}
	\STATE 1) Sample $n_t = m\gamma^{t-1}$ examples from $\D_u$, where the pseudo labels in $\D_u$ are generated by FixMatch 
    \STATE 2) Set the threshold $\rho_t = C\gamma^{-(t-1)}\widehat{\rho}$.
    \STATE 3) Compute truncated stochastic gradient $\g_t$ as (\ref{grad:truncated}). 
    \STATE 4) Update solution by SGD using stochastic gradient $\g_t$ and learning rate $\eta$: $\w_{t+1} = \w_t - \eta \g_t$.
\ENDFOR
\STATE Output: $\w_{T+1}$
\end{algorithmic}
\end{algorithm*}

\section{Dash: An SSL Algorithm with Dynamic Thresholding}
Before introducing the proposed method, we would like to point out the importance of unlabeled data selection in SSL from the theoretical view of optimization. Classical SSL methods~\citep{zhu2005semi,chapelle2006semi,zhu2009introduction,hady2013semi,van2020survey} assume that labeled data and unlabeled data are from the same distribution. That is to say, $\xi\sim\P$ holds for $\xi\in\D_l\cup\D_u$. Then SSL methods aim to solve the optimization problem (\ref{prob:eq:labeled}) by using a standard stochastic optimization algorithm like mini-batch stochastic gradient descent (SGD). Specifically, at iteration $t$, mini-batch SGD updates intermediate solutions by
\begin{align}
    \w_{t+1} = \w_{t} - \frac{\eta}{m}\sum_{i=1}^{m}\nabla f(\w_{t};\xi_{t,i}),
\end{align}
where $m$ is the mini-batch size, $\xi_{t,i}$ is sampled from training data $\D_l\cup\D_u$, $\nabla f(\w;\xi)$ is the gradient of $f(\w;\xi)$ in terms of $\w$. In this situation, the theoretical convergence guarantee of SSL algorithms can be simply established under mild assumptions on objective function $f(\w;\xi)$ such as smoothness and bounded variance~\citep{ghadimi2016mini}. If the labeled data and unlabeled data are not from the same distribution such as some of pseudo labels are not correct, classical SSL methods with standard stochastic optimization algorithm may lead to the performance drops~\citep{chapelle2006semi,oliver2018realistic}. Besides, the theoretical guarantee of the optimization algorithm for this case is not clear. This inspires us to design a new algorithm to overcome this issue. To this end, we proposed an SSL method that can dynamically select unlabeled examples during the training progress.

First, let us define the loss function for the proposed method. Same as FixMatch, the supervised loss $F_s(\w)$ for the proposed method is the standard cross-entropy loss on labeled data~$\D_l$: 
\begin{align}\label{dash:sup}
    F_s(\w) := \frac{1}{N_l}\sum_{i=1}^{N_l} f_s(\w;\xi_i),
\end{align}
where $\xi_{i} = (\x_{i},\y_{i})$ is sampled from $\D_l$, $f_s(\w;\xi_i) = H(\y_i, \p(\w;\alpha(\x_i)))$, and $\alpha(\x)$ is the weakly-augmented version of $x$. 
Since the new dynamic threshold is not fixed, we let it rely on the optimization iteration $t$ and it is denoted by $\rho_t$. Then the unsupervised loss is given by
\begin{align}\label{dash:unsup}
    F_u(\w) = \frac{1}{N_u}\sum_{i=1}^{N_u} I(f_u(\w; \xi_{i}^u) \le \rho_t) f_u(\w; \xi_{i}^u),
\end{align}
where $\xi_{i}^u = (\x^u_{i}, \widehat \y^u_{i})$ is sampled from $\D_u$,  $f_u(\w; \xi_{i}^u) = H(\widehat \y_{i}^u, \p(\w; \mathcal T(\x_{i}^u)) )$, $\mathcal T(x)$ is the strongly-augmented version of $x$, and the pseudo label $\widehat\y_{i}^u$ is generated based on (\ref{fixmatch:h}) and (\ref{fixmatch:y}) using prediction model $\w$ and a unlabeled image $\x_{i}^u$. The unsupervised loss (\ref{dash:unsup}) shows that the \Alg~will retain the unlabeled example whose loss is smaller than the threshold $\rho_t$.
If we rewrite the indicator function $I(\max(\h_i) \ge \tau)$ in (\ref{fixmatch:loss:unsup}) to an equivalent expression
\begin{align}
    I(-\log(\max(\h_i)) \le -\log(\tau)),
\end{align}
we can consider $-\log(\max(\h_i))$ as a cross-entropy loss for one-hot label. Roughly speaking, FixMatch retains the unlabeled images with the loss $-\log(\max(\h_i))$ smaller than $-log(0.95) \approx 0.0513$. It is worth nothing that the loss $-\log(\max(\h_i))$ contains the information of weakly-augmented images, while the loss $f_u(\w; \xi_{i}^u)$ in (\ref{dash:unsup}) includes the information of both  weakly-augmented and strongly-augmented images, meaning that the proposed method considers the entire loss function.

\begin{figure}[t]
    \centering
    \includegraphics[width=0.5\textwidth]{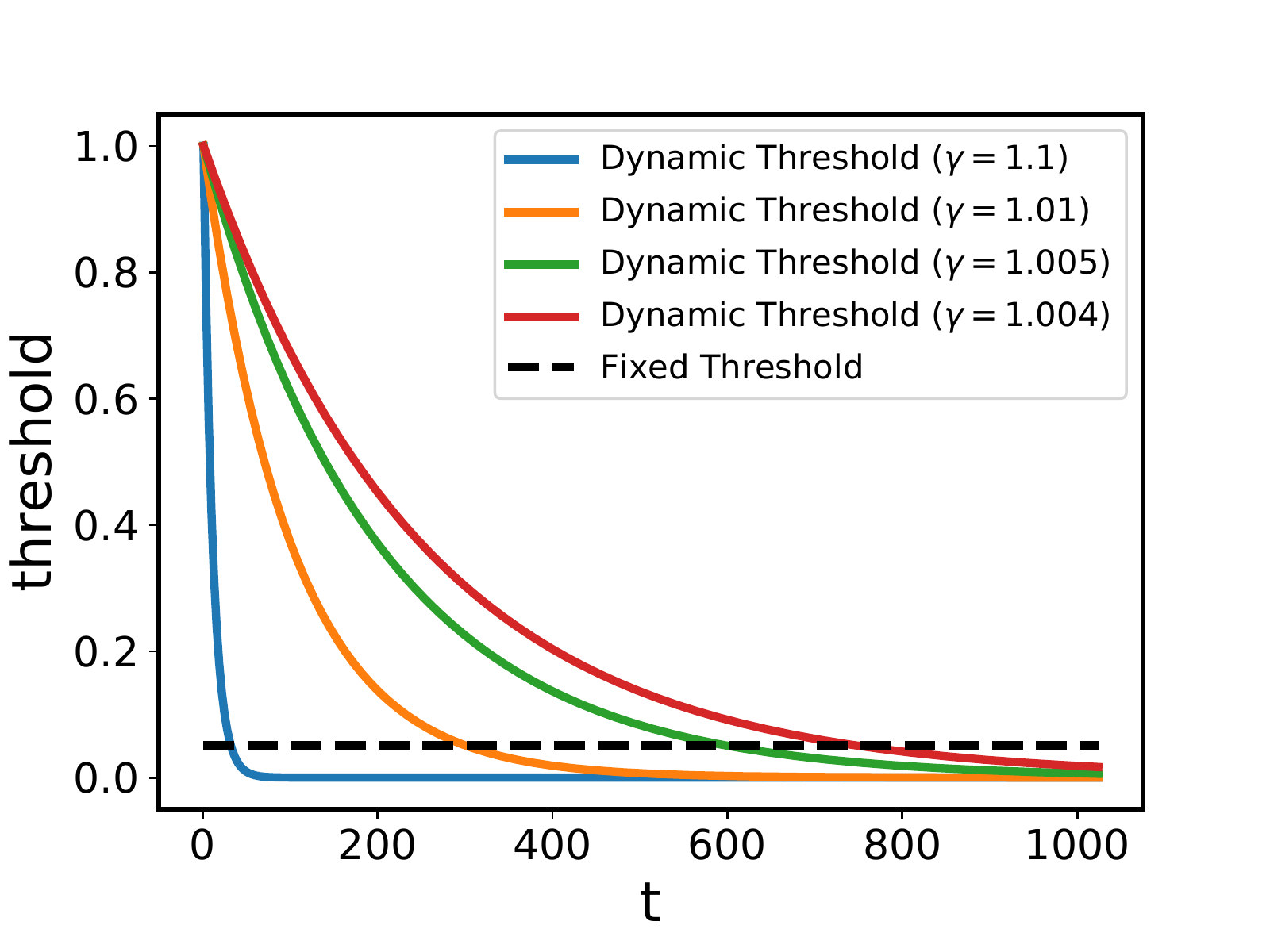}
    \caption{Comparison of fixed threshold and dynamic threshold. Fixed threshold is in the scale of negative log: $-\log(0.95)$, dynamic threshold $\rho_t = 1.0001 \gamma^{-(t-1)}$.}\label{fig:threshold}
\end{figure}
We then need to construct $\rho_t$. Intuitively, with the increase of the optimization iteration $t$, the loss function would decrease in general, so that $\rho_t$ is also required to decrease. Mathematically, we set the {\bf dynamic threshold} $\rho_t$ as a decreasing function of $t$, which is given by
\begin{align}\label{rho:t}
\rho_t := C\gamma^{-(t-1)}\widehat{\rho},    
\end{align}
where $C>1, \gamma>1$ are two constants. For example, we set $C=1.0001$ in our experiments and thus at first iteration (i.e., $t=1$) the unlabeled examples whose loss values are smaller than $\rho_t = 1.0001 \times\widehat\rho$ will be used in the training. Figure~\ref{fig:threshold} shows a comparison of fixed threshold used in FixMatch and dynamic threshold $\rho_t$ in (\ref{rho:t}) with $C=1.0001, \widehat\rho=1$ and different $\gamma$, where the threshold in FixMatch is in the scale of negative log. It seems our thresholding strategy matches the curve of training loss in many real applications (e.g., see Figure 1 (a) of~\citep{zhang2016understanding}). 

Next, it is important to estimate the value of $\widehat\rho$. In theory, it can be estimated by \begin{align}\label{eqn:rho:hat}
\widehat{\rho} = \max \left\{ a, \frac{4G^2\left(1+\delta b_0 m\right) }{\delta \mu a_0m} \right\},    
\end{align}
where contains several parameters related to the property of considered problem~(\ref{prob:eq:labeled}) whose detailed definitions can be found in Theorem~\ref{thm:main}. Please note that the estimation of $\widehat\rho$ in (\ref{eqn:rho:hat}) is for the use of convergence analysis only. In practice, we can use the following averaged loss from the training set $\D_l$ as the approximate $\widehat\rho$:
\begin{align}\label{eqn:rho:0}
    \widehat\rho \approx \frac{1}{|\D_l|}\sum_{ \xi_i \in \D_l} f(\w_1; \xi_i),
\end{align} 
where $|\D_l|$ is the number of examples in $\D_l$, and $\w_1$ can be learned on $\D_l$. We can see from (\ref{eqn:rho:0}) with (\ref{dash:unsup}) and (\ref{rho:t}) that the unlabeled examples whose losses are smaller than the averaged loss of labeled examples will be maintained during the training process.

Finally, it is ready to describe the proposed \Alg~algorithm in details that contains two stages: warm-up stage and selection stage. In the warm-up stage, it runs SGD to train a model over labeled data $\D_l$ in certain steps.  Not only for warm-up, this stage is also used for estimating $\widehat\rho$ in (\ref{eqn:rho:0}). 
In the selection stage, we conduct SGD against $\D_u$ using $\w_1$ as the initial solution. At each iteration $t$, we sample $n_t = m\gamma^{t-1}$ training examples from $\D_u$, where $m>1$ is a parameter defined in (\ref{def:m}). We compute the stochastic gradients according to (\ref{dash:unsup}):
\begin{align}\label{grad:truncated}
    \g_t = \frac{\sum_{i=1}^{n_t} I(f_u(\w_t; \xi_{t,i}^u) \leq \rho_t) \nabla f_u(\w_t; \xi_{t,i}^u) }{\sum_{i=1}^{n_t} I(f_u(\w_t; \xi_{t,i}^u) \leq \rho_t)}.
\end{align}
Since $N_l$ is small, in practice we can also construct the stochastic gradient by using all labeled data as 
\begin{align}\label{grad:truncated:3}
    \g_t = \frac{\sum_{i=1}^{n_t-N_l} I(f_u(\w_t; \xi_{t,i}^u) \leq \rho_t) \nabla f_u(\w_t; \xi_{t,i}^u) }{N_l+\sum_{i=1}^{n_t-N_l} I(f_u(\w_t; \xi_{t,i}^u) \leq \rho_t)}+\frac{\sum_{i=1}^{N_l} \nabla f_s(\w_t; \xi_{t,i})}{N_l+\sum_{i=1}^{n_t-N_l} I(f_u(\w_t; \xi_{t,i}^u) \leq \rho_t) },
\end{align}
where $\xi^u_{t,i} = (x^u_{t,i}, y^u_{t,i}) \in \D_u$ and $\xi_{t,i} = (x_{t,i}, y_{t,i}) \in \D_l$. 
The solution is then updated by mini-batch SGD, whose update step is given by
\begin{align}
\w_{t+1} = \w_t - \eta \g_t.
\end{align}
The detailed updating steps of the proposed algorithm are presented in Algorithm~\ref{alg:dash}, which called SSL with {\bf D}yn{\bf a}mic Thre{\bf sh}olding (\Alg).

\section{Convergence Result}
To establish the convergence result of the proposed Dash algorithm, we need to give some preliminaries. 
Recall that the training examples for the labeled data $\D_l$ follow the distribution $\P$, and we aim to minimize the optimization problem (\ref{prob:eq:labeled}).
For the examples coming from the unlabeled data $\D_u$, suppose it is a mixture of two distributions, $\P$ and $\Q$. More specifically, with a probability $q$, we will sample an example from $\P$ and with a probability $1 - q$ sample from $\Q$:
\begin{align}\label{dist:unlabeled}
    \xi\sim q \P + (1-q)\Q,~\text{where}~\xi\in\D_u,~q\in(0,1).
\end{align}
We define the objective function $B(\w)$ as the expected loss for distribution $\Q$, i.e.
\begin{align}\label{prob:eq:unlabeled}
    B(\w) := \E_{\xi\sim \Q}\left[f\left(\w; \xi \right)\right].
\end{align}

For the simplicity of convergence analysis, we do not consider the weak and strong augmentations, i.e., let $f_s=f_u=f$ in (\ref{dash:sup}) and (\ref{dash:unsup}). Without loss of generality, we assume that our loss function is non-negative and is bounded by $1$, i.e. $f(\w; \xi) \in [0, 1]$ for any $\w$ and $\xi$. Then by (\ref{prob:eq:labeled}) and (\ref{prob:eq:unlabeled}), we have $F(\w) \in [0, 1]$ and $B(\w) \in [0, 1]$. In order to differentiate the two distribution, we follow the idea of Tsybakov noisy condition~\citep{mammen1999smooth,tsybakov2004optimal}, and assume, for any solution $\w$, if $F(\w) \leq a$, then
\begin{align}\label{eqn:condition}
    \E_{\xi\sim\Q}\left[I_{\left\{\xi:f(\w;\xi) \leq F(\w)\right\}}(\xi)\right] \leq bA^{\theta}(\w), 
\end{align}
where $I_S(\xi)$ is an indicator function, $\theta \geq 1$, and $b$ is constant.

Finally, we made a few more assumptions that are commonly used in the studies of non-convex optimization (e.g., deep learning)~\citep{ghadimi2013stochastic,yuan2019stagewise}. Throughout this paper, we also make the assumptions on the problem (\ref{prob:eq:labeled}) as follows.
\begin{ass}\label{ass:2}
Assume the following conditions hold: 
\begin{itemize} 
\item[(i)]  The stochastic gradient $\nabla f(\w;\xi)$ is unbiased, i.e., $$\E_{\xi\sim\P}[\nabla f(\w;\xi)] = \nabla F(\w),$$ and there exists a constant $G>0$, such that $$\| \nabla f(\w;\xi)\|\leq G.$$ 
\item[(ii)] $F(\w)$ is smooth with a $L$-Lipchitz continuous gradient, i.e., it is differentiable and there exists a constant $L>0$ such that $$\|\nabla F(\w)  - \nabla F(\u)\|\leq L\|\w - \u\| ,\forall \w, \u \in\R^d.$$
\end{itemize}
\end{ass}
Assumption~\ref{ass:2} (i) assures that the stochastic gradient of the objective function is unbiased and the gradient of $f(\w; \xi)$ in terms of $\w$ is upper bounded. Assumption~\ref{ass:2} (ii) says the objective function is $L$-smooth, and it has an equivalent expression which is $\forall \w, \u \in \R^d$, 
\begin{align*}
 F(\w) - F(\u) \le \langle \nabla F(\u), \w - \u \rangle + \frac{L}{2}\|\w-\u\|^2.
\end{align*}   

We now introduce an important property regarding $F(\w)$, i.e. the Polyak-{\L}ojasiewicz (PL) condition~\citep{polyak1963gradient} of $F(\w)$. 
\begin{ass}\label{ass:3}
There exists $\mu>0$ such that $$2\mu (F(\w) - F(\w_*)) \le \|\nabla F(\w)\|^2, \forall \w\in\R^d.$$
\end{ass}
This PL property has been theoretically and empirically observed in training deep neural networks~\citep{allen2019convergence, yuan2019stagewise}. This condition is widely used to establish convergence in the literature of non-convex optimization, please see~\citep{yuan2019stagewise, wang2019spiderboost, karimi2016linear, li2018simple, charles2018stability} and references therein.  

Now, we are ready to provide the theoretical result for \Alg. Without loss of generality, let $F(\w_*) = 0$ in the analysis. Please note that this is a common property observed in training deep neural networks~\citep{zhang2016understanding,allen2019convergence,du2019gradient,arora2019fine,chizat2019lazy, hastie2019surprises,yun2019small}.
The following theorem states the convergence guarantee of the proposed \Alg~algorithm. We include its proof in the Appendix. 
\begin{thm}\label{thm:main}
Under Assumptions~\ref{ass:2} and \ref{ass:3}, suppose that $C>1$ and $F(\w_*) = 0$, for any $\delta \in (0,1)$, $\eta_0 L\leq 1$, $\eta L\leq 1$, let $T_0 = \frac{\log(2F(\w_0)/a)}{\log(1/(1-\eta_0\mu))}$, $m_0 =  \frac{4 G^2}{\delta \mu a}$,
\begin{align}
m = &\left\lceil \max\left(\sqrt{\frac{\log(2/\delta)}{q^2}}, \sqrt{\frac{\log(2/\delta)}{(1-q)^2}}, \sqrt{\frac{\log(2/\delta)}{q(1-C^{-1})^2}}\right) \right\rceil, \label{def:m}\\
\widehat{\rho} =& \max \left\{ a, \frac{4G^2\left(1+\delta b_0 m\right) }{\delta \mu a_0m} \right\} \label{def:rho:hat}
\end{align}
in Algorithm~\ref{alg:dash}, then with a probability $1 - (4T+1)\delta$, we have
    $F(\w_{T+1}) \leq \widehat{\rho}\gamma^{-T}$.
\end{thm}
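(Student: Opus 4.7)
The proof plan divides naturally into the two stages of Algorithm~\ref{alg:dash} and proceeds by induction on the selection stage. First, for the warm-up stage I would establish that $F(\w_1) = F(\u_{T_0}) \le a$ with probability at least $1-\delta$. Starting from the $L$-smoothness descent inequality
\begin{align*}
F(\u_{t+1}) \le F(\u_t) - \eta_0 \langle \nabla F(\u_t), \tilde\g_t\rangle + \frac{L\eta_0^2}{2}\|\tilde\g_t\|^2,
\end{align*}
taking conditional expectation, using Assumption~\ref{ass:2}(i), $\eta_0 L \le 1$, and the PL inequality from Assumption~\ref{ass:3}, one obtains $\E[F(\u_{t+1})] \le (1-\eta_0\mu)\,\E[F(\u_t)] + \eta_0^2 L G^2 / (2m_0)$. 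Unrolling for $T_0$ steps and applying Markov's inequality with the prescribed $m_0 = 4G^2/(\delta\mu a)$ and $T_0 = \log(2F(\w_0)/a)/\log(1/(1-\eta_0\mu))$ gives $F(\w_1) \le a \le \widehat{\rho}$ with the required probability.

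For the selection stage I would induct on $t$ with hypothesis $F(\w_t) \le \widehat{\rho}\gamma^{-(t-1)}$. The base case $t=1$ is exactly the warm-up conclusion together with $\widehat{\rho} \ge a$. For the inductive step the key geometric observation is that, under the hypothesis, $\rho_t = C\gamma^{-(t-1)}\widehat{\rho} \ge C\,F(\w_t)$, so the event $\{\xi : f(\w_t;\xi) \le \rho_t\}$ is strictly larger than $\{\xi : f(\w_t;\xi) \le F(\w_t)\}$. I would then analyze the truncated gradient $\g_t$ from~(\ref{grad:truncated}) by splitting the $n_t$ samples according to whether they come from $\P$ or from $\Q$ in the mixture~(\ref{dist:unlabeled}). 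For samples from $\P$, Markov's inequality on the nonnegative, $[0,1]$-bounded loss shows at least a $1-C^{-1}$ fraction pass the threshold in expectation, and the conditional gradient mean of the retained samples equals $\nabla F(\w_t)$ up to a truncation bias of order $O(F(\w_t))$. For samples from $\Q$, the Tsybakov-type condition~(\ref{eqn:condition}) bounds the fraction retained by $b\,F^\theta(\w_t)$, so the $\Q$-contamination is also $O(F(\w_t))$. Hoeffding/Bernstein bounds, calibrated by the choice of $m$ in~(\ref{def:m}) and the growing batch size $n_t = m\gamma^{t-1}$, upgrade these in-expectation statements to high-probability ones, each failing with probability at most $\delta$ (giving the four events per step counted in $(4T+1)\delta$).

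Writing the resulting decomposition as $\g_t = \nabla F(\w_t) + \epsilon_t$, substituting into the smoothness descent inequality, and applying the PL condition yields a one-step recursion
\begin{align*}
F(\w_{t+1}) \le (1-\eta\mu)\,F(\w_t) + \frac{L\eta^2}{2}\|\epsilon_t\|^2,
\end{align*}
in which the stochastic noise is dominated by a term of order $G^2/n_t = G^2/(m\gamma^{t-1})$. The definition of $\widehat{\rho}$ in~(\ref{def:rho:hat}) is calibrated precisely so that
\begin{align*}
(1-\eta\mu)\,\widehat{\rho}\gamma^{-(t-1)} + \frac{L\eta^2 G^2}{2\,m\gamma^{t-1}} \le \widehat{\rho}\gamma^{-t},
\end{align*}
which closes the induction. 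A union bound over the warm-up failure event and the four selection-stage failure events across $T$ iterations then produces the stated probability $1-(4T+1)\delta$.

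The main obstacle is controlling $\g_t$ in~(\ref{grad:truncated}) under the mixture~(\ref{dist:unlabeled}): both numerator and denominator are random, truncation introduces a bias that must be traded against variance using the Tsybakov condition, and the geometric decay of $\rho_t$, the geometric growth of $n_t$, and the target rate $\widehat{\rho}\gamma^{-t}$ must all be matched with compatible constants. The PL and smoothness manipulations are standard; the delicate bookkeeping is in the truncation analysis, and the precise constants in~(\ref{def:m}) and~(\ref{def:rho:hat}) are exactly what is needed to make the per-step recursion close.
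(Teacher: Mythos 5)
Your two-stage plan (warm-up bound $F(\w_1)\le\widehat\rho$ via smoothness plus PL plus concentration on $\tilde\g_t$; then induction on $F(\w_t)\le\widehat\rho\gamma^{-(t-1)}$ in the selection stage, splitting the batch by $\P$ versus $\Q$ origin, Markov for the $\P$-retention rate, the Tsybakov-type condition~(\ref{eqn:condition}) for the $\Q$-retention rate, and the growing batch size $n_t=m\gamma^{t-1}$ to cancel the variance) is essentially the approach taken in the paper. The warm-up part matches Lemma~\ref{lem:0} in spirit, and the count-based splitting into $\A_t^\rho$ and $\B_t^\rho$ with the exponential-lower/constant-upper bounds~(\ref{prop:1}) is the core of the paper's argument.

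Where your proposal drifts from the paper, and where I think it would get stuck if carried out literally, is in the characterization of the error in $\g_t$. You write that the conditional gradient mean of the retained $\P$-samples equals $\nabla F(\w_t)$ ``up to a truncation bias of order $O(F(\w_t))$.'' That is not quite right and, more importantly, it would not close the induction. Conditioning on $f(\w_t;\xi)\le\rho_t$ introduces a bias of size $\frac{1-p}{p}\cdot O(G)$ where $1-p=\Pr_{\xi\sim\P}(f(\w_t;\xi)>\rho_t)\le F(\w_t)/\rho_t\le C^{-1}$; under the induction hypothesis $F(\w_t)\le\widehat\rho\gamma^{-(t-1)}$ and $\rho_t=C\widehat\rho\gamma^{-(t-1)}$, this is $O(G/C)$, a \emph{constant}, not something scaling with $F(\w_t)$. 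A constant bias in $\|\epsilon_t\|^2$ plugged into your recursion $F(\w_{t+1})\le(1-\eta\mu)F(\w_t)+\frac{\eta}{2}\|\epsilon_t\|^2$ gives a non-vanishing floor, not geometric decay. The paper does not estimate a bias at all: it writes $\g_t=(1-b_t)\g_t^a+b_t\g_t^b$ and simply treats $\g_t^a$ (the empirical average over $\A_t^\rho$) as an unbiased estimate of $\nabla F(\w_t)$ with variance $O(G^2/|\A_t^\rho|)$, and separately bounds $\|\nabla F(\w_t)-\g_t^b\|^2\le 4G^2$ multiplied by the mixing weight $b_t\le(b_0/a_0)\gamma^{-(t-1)}$. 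The entire reason the induction closes is that $b_t$ decays like $\gamma^{-(t-1)}$ by a \emph{count ratio} $|\B_t^\rho|/|\A_t^\rho|$, not that the contamination is $O(F(\w_t))$; these are not interchangeable because only the one-sided bound $F(\w_t)\le\widehat\rho\gamma^{-(t-1)}$ is available. Your proposed one-step recursion also drops the $\Q$-contamination term entirely, which is the delicate part the definitions of $a_0$, $b_0$, and $\widehat\rho$ are built around. So the route is the right one, but the truncation/contamination bookkeeping as stated would need to be redone along the paper's lines before it yields the claimed constants.
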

{\bf Remark. } We can see from the above result that one can set  the iteration number $T$ to be large enough to ensure the convergence of \Alg. Specifically, in order to have an $\epsilon$ optimization error, one can set $T = \log(\widehat\rho/\epsilon)/\log(\gamma)$, then $F(\w_{T+1}) \leq \epsilon$. The total sample complexity of \Alg~is 
$T_0m_0 + \sum_{t=1}^{T}m\gamma^{t-1} \le T_0m_0 + \frac{m\gamma^T}{\gamma - 1} 
= T_0m_0 + \frac{m \widehat\rho}{\epsilon(\gamma-1)} = O(1/\epsilon).$ This rate matches the result of supervised learning in~\citep{karimi2016linear} when analyzing the standard SGD under Assumptions~\ref{ass:2} and \ref{ass:3}.

\begin{table*}[t]
\caption{Comparison of top-1 testing error rates for different methods using Wide ResNet-28-2 for CIFAR-10, Wide ResNet-28-8 for CIFAR-100 (in $\%$, mean $\pm$ standard deviation).}\label{table:CIFAR}
\begin{center}
  \def\sym#1{\ifmmode^{#1}\else\(^{#1}\)\fi}
  \begin{tabular}{cccccccccc}
    \hline
   &  \multicolumn{3}{c}{CIFAR-10} & \multicolumn{3}{c}{CIFAR-100}  \\
     \cmidrule(lr){2-4}\cmidrule(lr){5-7} 
  Algorithm  & \multicolumn{1}{c}{40 labels} & \multicolumn{1}{c}{250 labels} & \multicolumn{1}{c}{4000 labels} & \multicolumn{1}{c}{400 labels}
     & \multicolumn{1}{c}{2500 labels} & \multicolumn{1}{c}{10000 labels}  \\
    \hline
    $\Pi$-Model & - & 54.26$\pm$3.97 & 14.01$\pm$0.38 & - & 57.25$\pm$0.48 & 37.88$\pm$0.11  \\
    Pseudo-Labeling & - & 49.78$\pm$0.43 & 16.09$\pm$0.28 & - & 57.38$\pm$0.46 & 36.21$\pm$0.19 \\
    Mean Teacher & - & 32.32$\pm$2.30 & 9.19$\pm$0.19 & - & 53.91$\pm$0.57 & 35.83$\pm$0.24 \\
    MixMatch & 47.54$\pm$11.50 & 11.05$\pm$0.86 & 6.42$\pm$0.10 & 67.61$\pm$1.32 & 39.94$\pm$0.37 & 28.31$\pm$0.33 \\
    UDA & 29.05$\pm$5.93 & 8.82$\pm$1.08 & 4.88$\pm$0.18 & 59.28$\pm$0.88 &  33.13$\pm$0.22 & 24.50$\pm$0.25  \\
    ReMixMatch & 19.10$\pm$9.64 & 5.44$\pm$0.05 & 4.72$\pm$0.13 & {\bf44.28}$\pm$2.06 & {27.43}$\pm$0.31 & 23.03$\pm$0.56  \\
    RYS (UDA) &- & 5.53$\pm$0.17 & 4.75$\pm$0.28 & - & - & -  \\
    RYS (FixMatch)& - & 5.05$\pm$0.12 & 4.35$\pm$0.06 & - &- & -  \\
    \hline
    FixMatch (CTA) & 11.39$\pm$3.35 & 5.07$\pm$0.33 & 4.31$\pm$0.15 & 49.95$\pm$3.01 & 28.64$\pm$0.24 & 23.18$\pm$0.11 \\  
  {\bf Dash (CTA, ours)}& {\bf9.16}$\pm$4.31  &   4.78$\pm$0.12 &   4.13$\pm$0.06 &   44.83$\pm$1.36   & 27.85$\pm$0.19    &   22.77$\pm$0.21   \\
    \hline
    FixMatch (RA) & 13.81$\pm$3.37 & 5.07$\pm$0.65& 4.26$\pm$0.05& 48.85$\pm$1.75& 28.29$\pm$0.11&22.60$\pm$0.12\\
    {\bf Dash (RA, ours)}& 13.22$\pm$3.75& {\bf4.56}$\pm$0.13& {\bf4.08}$\pm$0.06& 44.76$\pm$0.96 & {\bf27.18}$\pm$0.21& {\bf 21.97}$\pm$0.14\\
    \hline
  \end{tabular}
  \end{center}
\end{table*}

\section{Experiments}
In this section, we present some experimental results for image classification tasks. To evaluate the efficacy of \Alg, we compare it with several state-of-the-art (SOTA) baselines on several standard SSL image classification benchmarks including CIFAR-10, CIFAR-100~\citep{krizhevsky2009learning}, SVHN~\citep{Netzer201137648}, and STL-10~\citep{coates2011analysis}. Specifically, SOTA baselines are MixMatch~\citep{berthelot2019mixmatch}, UDA~\citep{xie2020unsupervised}, ReMixMatch~\citep{berthelot2019remixmatch}, FixMatch~\citep{sohn2020fixmatch} and the algorithm RYS from~\citep{ren2020not}\footnote{Since the authors did not name their algorithm, we use RYS to denote their algorithm for simplicity, where RYS is the combination of initials for 
last names of the authors.}. Besides, $\Pi$-Model~\citep{rasmus2015semi}, Pseudo-Labeling~\citep{lee2013pseudo} and Mean Teacher~\citep{tarvainen2017mean} are included in the comparison. 

\subsection{Data-sets} 
The original CIFAR data-sets have 50,000 training images and 10,000 testing images of 32$\times$32 resolutions, and CIFAR-10 has 10 classes containing 6,000 images each, while CIFAR-100 has 100 classes containing 600 images each. The original SVHN data-set has 73,257 digits for training and 26,032 digits for testing, and the total number of classes is 10. The original STL-10 data set has 5,000 labeled images from 10 classes and 100,000 unlabeled images, which contains out-of-distribution unlabeled images. 

Following by~\citep{sohn2020fixmatch}, we train ten benchmarks with different settings: CIFAR-10 with 4, 25, or 400 labels per class, CIFAR-100 with 4, 25, or 100 labels per class, SVHN with 4, 25, or 100 labels per class, and the STL-10 data set. For example, the benchmark CIFAR-10 with 4 labels per class means that there are 40 labeled images in CIFAR-10 and the remaining images are unlabeled, and then we denote this data set by CIFAR-10 with 40 labels. For fair comparison, same sets of labeled images from CIFAR, SVHN and STL-10 were used for the proposed \Alg~ method and other baselines in all experiments.

\subsection{ Models and Hyper-parameters}
We use the Wide ResNet-28-2 model~\citep{zagoruyko2016wide} as the backbone for CIFAR-10 and SVHN, Wide ResNet-28-8 for CIFAR-100, and Wide ResNet-37-2 for STL-10. In the proposed \Alg, we use FixMatch~\footnote{In our experiments, the FixMatch codebase is used: \url{https://github.com/google-research/fixmatch}} as our pipeline to generate pseudo labels and to construct supervised and unsupervised losses. 
We employ CTAugment (CTA)~\citep{cubuk2019autoaugment} and RandAugment (RA)~\citep{cubuk2020randaugment} for the strong augmentation scheme following by~\citep{sohn2020fixmatch}. Similar to~\citep{sohn2020fixmatch}, we use the same training protocol such as optimizer, learning rate schedule, data preprocessing, random seeds, and so on.

The total number of training epochs is set to be 1024 and the mini-bach size is fixed as 64. For the value of weight decay, we use $5\times 10^{-4}$ for CIFAR-10, SVHN and STL-10, $1\times10^{-3}$ for CIAR-100. The SGD with momentum parameter of $0.9$ is employed as the optimizer. The cosine learning rate decay schedule~\citep{loshchilov2016sgdr} is used as~\citep{sohn2020fixmatch}. The initial learning rate is set to be $0.06$ for all data-sets. We use (\ref{grad:truncated}) to compute stochastic gradients. 

At the first $10$ epochs, we do not implement the selection scheme and thus the algorithm uses all selected training examples, meaning that the threshold is infinite, i.e., $\rho_t = \infty$. After that, we use the threshold to select unlabeled examples, and we choose $\gamma = 1.27$ in $\rho_t$ to reduce the dynamic threshold until its value to be $0.05$. That is to say, in practice we give a minimal value of dynamic threshold, which is $0.05$\footnote{In practice, we use $\rho_t = \max\{\rho_t, 0.05\}$.}. We fix the constant $C$ as $1.0001$ and estimate the value of $\widehat\rho$ by using (\ref{eqn:rho:0}). We decay the dynamic threshold every 9 epochs. We use the predicted label distribution as soft label during the training and it is sharpened by adjusting its temperature of $0.5$, which is similar to MixMatch. Once the dynamic threshold is reduced to $0.05$, we turn it to one-hot label in the training since the largest label probability is close to $1$.

\begin{table*}[t]
\caption{Comparison of top-1 testing error rates for different methods using Wide ResNet-28-2 for SVHN and Wide ResNet-37-2 for STL-10 (in $\%$, mean $\pm$ standard deviation).}\label{table:SVHN}
\begin{center}
  \def\sym#1{\ifmmode^{#1}\else\(^{#1}\)\fi}
  \begin{tabular}{ccccccccccc}
    \hline
    & \multicolumn{3}{c}{SVHN} & \multicolumn{1}{c}{STL-10} \\
     \cmidrule(lr){2-4}  \cmidrule(lr){5-5}
  Algorithm   & \multicolumn{1}{c}{40 labels} & \multicolumn{1}{c}{250 labels} & \multicolumn{1}{c}{1000 labels} & \multicolumn{1}{c}{1000 labels} \\
    \hline
    $\Pi$-Model &  - & 18.96$\pm$1.92 & 7.54$\pm$0.36  & 26.23$\pm$0.82 \\
    Pseudo-Labeling &  - & 20.21$\pm$1.09 & 9.94$\pm$0.61 & 27.99$\pm$0.83\\
    Mean Teacher &  - & 3.57$\pm$0.11 & 3.42$\pm$0.07 & 21.43$\pm$2.39\\
    MixMatch &  42.55$\pm$14.53 & 3.98$\pm$0.23 & 3.50$\pm$0.28 & 10.41$\pm$0.61 \\
    UDA &  52.63$\pm$20.51 & 5.69$\pm$2.76& 2.46$\pm$0.24 & 7.66$\pm$0.56\\
    ReMixMatch &  3.34$\pm$0.20 & 2.92$\pm$0.48 & 2.65$\pm$0.08 & 5.23$\pm$0.45 \\
    RYS (UDA) &- & 2.45$\pm$0.08 &  2.32$\pm$0.06 & - \\
    RYS (FixMatch)&  - & 2.63$\pm$0.23 &   2.34$\pm$0.15 & -  \\
   \hline
    FixMatch (CTA) &  7.65$\pm$7.65 & 2.64$\pm$0.64 & 2.36$\pm$0.19 & 5.17$\pm$0.63 \\  
  {\bf Dash (CTA, ours)}&    3.14$\pm$1.60 &   2.38$\pm0.29$ &   2.14$\pm$0.09 & {\bf3.96}$\pm$0.25 \\
    \hline
    FixMatch (RA) &  3.96$\pm$2.17 & 2.48$\pm$0.38 & 2.28$\pm$0.11 & 7.98$\pm$1.50 \\  
  {\bf Dash (RA, ours)}&    {\bf3.03}$\pm$1.59 &   {\bf2.17}$\pm$0.10 &   {\bf2.03}$\pm$0.06  & 7.26$\pm$0.40  \\
  \hline
  \end{tabular}
  \end{center}
\end{table*}

\subsection{ Results} We report the top-1 testing error rates of the proposed \Alg~along within other baselines for CIFAR in Table~\ref{table:CIFAR} and for SVHN and STL-10 in Table~\ref{table:SVHN}, where all the results of baselines are from~\citep{sohn2020fixmatch} except that the results of RYS are from~\citep{ren2020not}. All top-1 testing error rates are averaged over 5 independent random trails with their standard deviations using the same random seeds as baselines used. 

We can see from the results that the proposed \Alg~method has the best performance on CIFAR-10, SVHN and STL-10. For CIFAR-100, the proposed \Alg~is comparable to ReMixMatch, where ReMixMatch performs a bit better on 400 labels and \Alg~using RA is a bit better on 2500 labels and 10000 labels. This reason is that the proposed \Alg~uses FixMatch as its pipeline, and ReMixMatch uses distribution alignment (DA) to encourages the model to predict balanced class distribution (the class distribution of CIFAR-100 is balanced), while FixMatch and \Alg~do not use DA. We further conduct 
\Alg~with DA technique on CIFAR-100 with 400 labels, and the top-1 testing error rate is $43.31\%$, which is better than ReMixMatch ($44.28\%$). We also find that Dash performs well on the data set with out-of-distribution unlabeled images, i.e., STL-10. The result in Table~\ref{table:SVHN} shows that \Alg~with CTA has the SOTA performance of $3.96\%$ on top-1 testing error rate.

Besides, the proposed \Alg~can always outperform FixMatch, showing that the use of dynamic threshold is important to the overall performance. We find the proposed \Alg~ has large improvement when the labeled examples is small (the data-sets with 4 labels per classes), comparing to FixMatch. By using CTA, on CIFAR-100 with 400 labels, on CIFAR-10 with 40 labels, and on SVHN with 40 labels, the proposed \Alg~method outperforms FixMatch result more than $19\%$, $10\%$, and $58\%$ in the terms of top-1 testing error rate, respectively. While by using RA, the corresponding improved rates are $4\%$, $8\%$, and $23\%$ respectively. These results reveal that the dynamic unlabeled example selection is an important term in SSL when the labeled data is small. 

\subsection{Ablation study}
\begin{table}[t]
\centering
\caption{Comparison of top-1 testing error rates for different values of $\gamma$ on CIFAR-10 (in $\%$).}\label{table:diff:gamma}
\begin{tabular}{ccccc}
\hline
$\gamma$ & 1.01 & 1.1 & 1.2 & 1.3 \\ 
\hline
250 labels &  4.85 & {\bf4.76} & 4.99 & 4.82  \\
4000 labels & 4.39 &  4.28 & {\bf 4.11} & 4.31\\
\hline
\end{tabular}
\end{table}
\begin{table}[t]
\centering
\caption{Comparison of top-1 testing error rates for PL and Dash with PL on CIFAR-10 (in $\%$).}\label{table:PL}
\begin{tabular}{ccc}
\hline
Algorithm & PL & Dash-PL \\ 
\hline
250 labels & 49.78 & {\bf 46.90}  \\
4000 labels & 16.09 & {\bf 15.59} \\
\hline
\end{tabular}
\end{table}
In this subsection, we provide two ablation studies using data sets CIFAR-10 with 250 labels and CIFAR-10 with 4000 labels. The first one is to use different $\gamma$ in the dynamic threshold, and the second one is to change FixMatch to Pseudo-Labeling as the pseudo label generator in Dash. 

{\bf Different values of $\gamma$.} Since $\gamma$ is a key component of the dynamic threshold, we conduct an ablation study on different values of $\gamma$ in Dash. For simplicity, we only implement the CTA case. We try four different values of $\gamma \in \{1.01,1.1,1.2, 1.3\}$ and summarize the results in Table~\ref{table:diff:gamma}. Comparing these results with that in Table~\ref{table:CIFAR}, we will find that the choice of $\gamma = 1.27$ in the previous subsection is not the best one. The results also show that Dash is not so sensitive to $\gamma$ in a certain range.

{\bf Dash with Pseudo-Labeling.} Since Dash can be integrated with many existing SSL methods, we use Pseudo-Labeling (PL)~\citep{lee2013pseudo} as the pipeline to generate pseudo labels in Dash. The results are listed in Table~\ref{table:PL}, showing that Dash can improve PL, especially when the labeled images is small.

\section{Conclusion}
We propose a method \Alg~that dynamically selects unlabeled data examples to train learning models. Its selection strategy keeps the unlabeled data whose loss value does not exceed a dynamic threshold at each optimization step. The proposed \Alg~method is a generic scheme that can be easily integrated with existing SSL methods. We demonstrate the use of dynamically selecting unlabeled data can help to the performance of existing SSL method FixMatch in the semi-supervised image classification benchmarks, indicating the importance of dynamic threshold in SSL. The theoretical analysis shows the convergence guarantee of the proposed \Alg~under the non-convex optimization setting.

\section*{Acknowledgements}
The authors would like to thank the anonymous ICML 2021 reviewers for their helpful comments. 

\bibliographystyle{plainnat} 
\bibliography{ref}

\newpage
\appendix
\section{Proof of Theorem~\ref{thm:main}}\label{sec:proof}
In this section, we present the proof of our main theoretical result. To this end, we divide the analysis into two part, with the first part devoted to examining the properties of $\w_1$ learned in the first step and the second part devoted to the convergence for the iterations. As the setting of SSL, we assume that the number of unlabeled data is large, i.e., $N_u$ is sufficiently large. 
\subsection{Properties of Solution $\w_1$}
We first give the property of $\w_1$ in the following lemma, whose proof can be found in the Appendix. 
\begin{lemma}\label{lem:0}
Given $\delta\in(0,1)$, run the Warm-up Stage of Algorithm~\ref{alg:dash} with $\eta_0\le \frac{1}{L}$, $T_0 = \frac{\log(2F(\w_0)/a)}{\log(1/(1-\eta_0\mu))}$ and $m_0 =  \frac{4 G^2}{\delta \mu a}$, then with a probability $1 - 5\delta$ we have
    $F(\w_1) \leq  \widehat\rho$. 
\end{lemma}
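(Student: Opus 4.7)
The plan is to run the textbook mini-batch SGD analysis on the supervised loss $F$ under smoothness (Assumption~\ref{ass:2}(ii)) and the PL condition (Assumption~\ref{ass:3}), arrive at an in-expectation bound of the form $\E[F(\w_1)] \le a/2 + \delta a/8$, and then upgrade it to the $1-5\delta$ high-probability guarantee. Since $\widehat\rho \ge a$ by its definition in (\ref{def:rho:hat}), it suffices to establish $F(\w_1) \le a$ on the good event.

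For the expectation bound, I would apply the smoothness descent lemma to the update $\u_{t+1} = \u_t - \eta_0 \tilde\g_t$, decompose $\tilde\g_t = \nabla F(\u_t) + \xi_t$ with $\xi_t$ the zero-mean mini-batch noise, and use $\eta_0 L \le 1$ to obtain, after taking conditional expectation on $\u_t$,
\begin{equation*}
\E[F(\u_{t+1}) \mid \u_t] \leq F(\u_t) - \frac{\eta_0}{2}\|\nabla F(\u_t)\|^2 + \frac{L\eta_0^2 G^2}{2m_0},
\end{equation*}
where the cross term vanishes by unbiasedness in Assumption~\ref{ass:2}(i) and the mini-batch noise variance satisfies $\E[\|\xi_t\|^2 \mid \u_t] \le G^2/m_0$ because $\|\nabla f(\w;\xi)\| \le G$. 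Invoking PL with $F(\w_*) = 0$ converts the gradient term into $-\eta_0\mu F(\u_t)$, yielding the geometric contraction
\begin{equation*}
\E[F(\u_{t+1}) \mid \u_t] \leq (1-\eta_0\mu)\,F(\u_t) + \frac{L\eta_0^2 G^2}{2m_0},
\end{equation*}
which I would unroll $T_0$ times and collapse the geometric noise series to get $\E[F(\u_{T_0})] \le (1-\eta_0\mu)^{T_0} F(\w_0) + L\eta_0 G^2/(2\mu m_0)$. The chosen $T_0$ makes the first term exactly $a/2$, and the chosen $m_0 = 4G^2/(\delta\mu a)$ together with $\eta_0 L \le 1$ bounds the second term by $\delta a/8$.

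To upgrade from $\E[F(\w_1)] \le a/2 + \delta a/8$ to probability $1-5\delta$, I would redo the per-step expansion of the descent inequality in probability rather than in expectation. The two stochastic quantities that need control along the trajectory are the martingale sum $\sum_t \langle \nabla F(\u_t), \xi_t \rangle$, whose per-step increments are bounded by $2G^2$ (so Azuma--Hoeffding applies), and the accumulated variance term $\sum_t \|\xi_t\|^2$ (controlled via a Bernstein-type bound using the bounded-gradient assumption). Each concentration event, together with a final Markov step on $F(\w_1)$ itself, can be engineered to fail with probability at most $\delta$; a union bound over the (at most five) exceptional events then produces $F(\w_1) \le a \le \widehat\rho$ with probability at least $1-5\delta$. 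I expect the main obstacle to be the bookkeeping in this union bound: the deviation quantiles coming from concentration scale like $\sqrt{\log(1/\delta)/m_0}$, and the key is that the $1/\delta$ factor baked into $m_0$ must dominate these logarithmic contributions so that they fit comfortably inside the $\delta a/8$ slack already present in the expectation argument.
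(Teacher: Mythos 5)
Your first half---descent lemma plus PL plus geometric unrolling with the same $T_0$, $m_0$---matches the paper's route. Where you diverge is in how to pass from an in-expectation statement to the claimed $1-5\delta$ probability, and here your plan has both a genuinely different strategy and a real gap.

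\textbf{What the paper actually does.} It never forms the expectation bound $a/2 + \delta a/8$ at all. Instead it bounds the per-step noise \emph{pointwise in high probability}: from $\E\bigl[\|\tilde\g_t - \nabla F(\u_t)\|^2\bigr] \le 4G^2/m_0$ it invokes a concentration inequality (essentially a Markov-type step) to get $\|\tilde\g_t - \nabla F(\u_t)\|^2 \le 4G^2/(\delta m_0)$ with high probability, i.e.\ the noise is inflated by $1/\delta$. With this pointwise bound in hand the recursion is \emph{deterministic}; unrolling gives $F(\w_1) \le (1-\eta_0\mu)^{T_0}F(\w_0) + 2G^2/(\delta m_0 \mu)$, and the $1/\delta$ inflation is exactly cancelled by the $1/\delta$ baked into $m_0 = 4G^2/(\delta\mu a)$, producing $F(\w_1) \le a/2 + a/2 = a$. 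So the paper pays for the high-probability guarantee by letting the noise term eat up a full $a/2$ of slack, whereas your expectation argument only yields a noise contribution of $\delta a/8$. The two approaches are therefore genuinely different: yours is tighter in expectation but requires a nontrivial concentration upgrade; the paper's is wasteful but the unrolling is immediate once the per-step noise is controlled.

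\textbf{The gap in your upgrade.} Your final sentence proposes ``a final Markov step on $F(\w_1)$ itself'' as one of the concentration events. That cannot work: you have $\E[F(\w_1)] \approx a/2 + \delta a/8$, so Markov applied to the nonnegative random variable $F(\w_1)$ gives only $\Pr[F(\w_1) > a] \le \E[F(\w_1)]/a \approx 1/2$, which is nowhere near $5\delta$. The resolution must come from the martingale/Bernstein route you sketch, not from a terminal Markov step. If you pursue that route you need to (i) control the discounted martingale sum $\sum_t (1-\eta_0\mu)^{T_0-1-t}\,\eta_0(1-\eta_0 L)\langle\nabla F(\u_t), \xi_t\rangle$ via Azuma/Bernstein using the mini-batch variance $G^2/m_0$, not just the crude bound $\|\xi_t\|\le 2G$, otherwise the deviation will not be small enough; and (ii) control the discounted sum of $\|\xi_t\|^2$ similarly. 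You correctly anticipate that the $1/\delta$ in $m_0$ dominates the $\log(1/\delta)$ coming out of these tail bounds, so the plan is in principle salvageable, but this bookkeeping has to be carried out explicitly and the ``final Markov on $F(\w_1)$'' should be deleted. The paper sidesteps all of this by accepting the crude $1/\delta$ inflation per step.

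One minor note: your variance bound $\E[\|\xi_t\|^2\mid\u_t] \le G^2/m_0$ is correct (using variance $\le$ second moment); the paper uses the cruder $4G^2/m_0$ from $\|\nabla f - \nabla F\|\le 2G$. Either is fine, and both lemma statement choices of $T_0$ and $m_0$ are consistent with either constant.
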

\begin{proof}
Since the objective function $F(\u)$ has a Lipchitz continuous gradient in Assumption~\ref{ass:2} (ii), we have
\begin{align}\label{lem:0:ineq:0}
\nonumber&F(\u_{t+1}) - F(\u_t) \\
\nonumber\le& \langle \nabla F(\u_t), \u_{t+1} - \u_t \rangle + \frac{L}{2}\|\u_{t+1} - \u_t\|^2\\
\nonumber\overset{(a)}{=}& \frac{\eta_0}{2}\|\nabla F(\u_t) - \tilde\g_t\|^2- \frac{\eta_0}{2}\left(\|\nabla F(\u_t)\|^2 + \left(1 - \eta_0 L\right)\|\tilde\g_t\|^2 \right) \\
\overset{(b)}{\leq}& \frac{\eta_0}{2}\|\nabla F(\u_t) - \tilde\g_t\|^2 - \eta_0\mu F(\u_t),
\end{align}
where (a) follows the update of $\u_{t+1} =\u_t - \eta_0 \tilde\g_t$;  (b) follows from Assumption~\ref{ass:3} and $\eta_0 L \le 1$. 
Since $\E[\tilde\g_t] = \nabla F(\u_t)$ and
\begin{align}\label{lem:0:ineq:1}
\nonumber &\E\left[\|\tilde\g_t-\nabla F(\u_t)\|^2\right] \\
\nonumber =&  \frac{1}{m_0^2}\sum_{i=1}^{m_0}\E\left[\|\nabla f(\u_{t};\xi_i^{t})-\nabla F(\u_t)\|^2\right]\\
\le & \frac{4G^2}{m_0},~~(\text{Assumption~\ref{ass:2} (i)})
\end{align}
using concentration inequality in Lemma 4 of \citep{ghadimi2016mini}, 
we have with a probability $1 - 5\delta$,
\begin{align}\label{lem:0:ineq:2}
\left\|\tilde\g_t - \nabla F(\u_t)\right\|^2 \leq \frac{4G^2}{\delta m_0}.
\end{align}
Using the above bound $\|\tilde\g_t - \nabla F(\u_t)\|$, we can further bound $F(\u_T)$ by using (\ref{lem:0:ineq:0}) and (\ref{lem:0:ineq:1}) as
\begin{align*}
F(\u_{T_0}) \leq & (1 - \eta_0\mu)F(\u_{T_0-1}) + \frac{2\eta_0 G^2}{\delta m_0}\\
\leq & (1 - \eta_0\mu)^{T_0} F(\u_0) + \frac{2\eta_0 G^2}{\delta m_0} \sum_{i=0}^{T_0-1} (1-\eta_0\mu)^i\\
  \leq & (1 - \eta_0\mu)^{T_0} F(\u_0) + \frac{2 G^2}{\delta m_0\mu},
\end{align*}
which implies 
\begin{align}\label{lem:0:ineq:3}
 F(\w_1)  \leq & (1 - \eta_0\mu)^{T_0} F(\w_0) + \frac{2 G^2}{\delta m_0\mu}.
\end{align}
Be selecting $T_0 \ge \frac{\log(2F(\w_0)/a)}{\log(1/(1-\eta_0\mu))}$ and $m_0 \ge \frac{4 G^2}{\delta \mu a}$, we have
\begin{align}\label{lem:0:ineq:4}
    F(\w_1) \leq a.
\end{align}
Therefore the condition in (\ref{eqn:condition}) is applicable in this case. On the other hand, by the definition of $\widehat\rho$ in (\ref{def:rho:hat}), we know, with a probability $1 - 5\delta$, that
\begin{align}\label{lem:0:ineq:7}
F(\w_1) \le \widehat{\rho}.
\end{align}
\end{proof}

\subsection{Analysis of Iterative Algorithm}
The key to our analysis is to show that for each iteration $t$, with a high probability, we have $F(\w_t) \leq \widehat{\rho}\gamma^{-(t-1)}$. We will prove this statement by induction.  
Before we carry out our analysis, we define a few important constants
\begin{align}
\beta =& \max\left(\sqrt{\frac{\log(2/\delta)}{2q^2m}}, \sqrt{\frac{\log(2/\delta)}{2(1-q)^2m}}\right),\label{eqn:beta}\\
\alpha  = & \sqrt{\frac{\log(2/\delta)}{qm(1-C^{-1})^2}}
\ge \sqrt{\frac{\log(2/\delta)}{2qm(1-\beta)(1-C^{-1})^2}},\label{eqn:alpha}\\
a_0 =& (1 - C^{-1})(1- \beta)(1-\alpha)q, \label{eqn:A0}\\
b_0 =& 2\left((1-q)(1+\beta)  b \widehat\rho^\theta+ \log(1/\delta)\right), \label{eqn:B0}\\
b_1 =& \frac{b_0}{a_0}. \label{eqn:AB}
\end{align}
When $t = 1$, we have $F(\w_1) \leq \widehat{\rho}$ according to Lemma~\ref{lem:0}. At each iteration $t$, given the solution $\w_t$, according to our inductive assumption, with a probability $1 - (4t+1)\delta$, we have
\begin{align}\label{inq:induction}
F(\w_t) \le \widehat\rho \gamma^{-(t-1)},
\end{align}
For the $n_t = m\gamma^{t-1}$ training examples sampled from $\D_u$, we divide it into two sets for the analysis use only, i.e. set $\A_t$ that includes examples sampled from $\P$ and set $\B_t$ that includes examples sampled from $\Q$. We furthermore denote by $\A_t^{\rho}$ and $\B_t^{\rho}$ the subset of examples in $\A_t$ and $\B_t$ whose loss is smaller than the given threshold $\rho_t$, i.e.
\begin{align}
\A_t^{\rho} = &\left\{\xi \in \A_t: f(\w_t;\xi) \leq \rho_t \right\}, \\
\B_t^{\rho} = &\left\{\xi \in \B_t:  f(\w_t;\xi) \leq \rho_t \right\},
\end{align}
where $\rho_t = C\widehat{\rho}\gamma^{-(t-1)}$ with $C > 1$. Evidently, the samples used for computing $g_t$ is the union of $\A_t^{\rho}$ and $\B_t^{\rho}$. The following result bounds the size of $\A_t^{\rho}$ and $\B_t^{\rho}$.
With a probability $1 - 4\delta$, we have
\begin{align}\label{prop:1}
    |\A_t^{\rho}| \geq a_0m\gamma^{(t-1)}, \quad |\B_t^\rho| \leq b_0m,
\end{align}
where $a_0$ and $b_0$ are defined in (\ref{eqn:A0}) and (\ref{eqn:B0}).
Using the Hoeffding's inequality, 
we have, with a probability $1 - 2\delta$,
\begin{align}
    |\A_t| \geq & qn_t\left( 1 - \sqrt{\frac{\log(2/\delta)}{2q^2n_t}}\right)  \overset{(\ref{eqn:beta})}{\geq} qn_t\left(1 - \beta\right), \label{eqn:0:1}\\
   |\B_t| \leq & (1-q)n_t\left(1 + \sqrt{\frac{\log(2/\delta)}{2(1 - q)^2n_t}}\right) 
    \overset{(\ref{eqn:beta})}{\leq}  (1-q)n_t\left(1 + \beta\right). \label{eqn:0:2}
\end{align}
Furthermore, using Markov inequality, 
we have
\begin{align}\label{eqn:1}
\nonumber  \text{Pr}_{\xi\sim \P}(f(\w_t;\xi) \le \rho_t) = &1- \text{Pr}_{\xi\sim \P}(f(\w_t;\xi) \ge \rho_t) \\
\nonumber   \overset{\text{Markov inequality} 
}{\ge} &1- \frac{\E_{\xi\sim \P}[f(\w_t;\xi)]}{\rho_t}\\
\nonumber   = &1- \frac{F(\w_t)}{C\widehat\rho \gamma^{-(t-1)}}\\
\overset{(a)}{\ge}&1- \frac{\widehat\rho \gamma^{-(t-1)}}{C\widehat\rho \gamma^{-(t-1)}}
= 1 - C^{-1},
\end{align}
where (a) uses the fact that $F(\w_t)\le \widehat\rho \gamma^{-(t-1)}$. Using the property in (\ref{eqn:1}), we can bound the size of $\A_t^{\rho}$ by Hoeffding's inequality, 
i.e., with a probability $1 - 2\delta$, we have
\begin{align} \label{eqn:2}
|\A_t^{\rho}| \geq \left(1 - C^{-1}\right)|\A_t|\left( 1 - \sqrt{\frac{\log(2/\delta)}{2\left(1 - C^{-1}\right)^2|\A_t|}}\right) 
 \overset{(\ref{eqn:beta})(\ref{eqn:alpha})}{\geq}   \left(1 - C^{-1}\right)(1-\beta)(1-\alpha)qn_t :=a_0m\gamma^{t-1},
\end{align}
where $a_0$ is defined in (\ref{eqn:A0}). 
Using the inequality in (\ref{eqn:condition}) we know
\begin{align}\label{eqn:3}
\nonumber  \text{Pr}_{\xi\sim \Q}(f(\w_t;\xi) \le \rho_t) = &\text{Pr}_{\xi\sim \Q}(f(\w_t;\xi) \le C\widehat\rho \gamma^{-(t-1)}) \\
\nonumber    \overset{(a)}{\le} &\text{Pr}_{\xi\sim \Q}(f(\w_t;\xi) \le \widehat\rho \gamma^{-(t-1)}) \\
\nonumber    \overset{(b)}{\le} &\text{Pr}_{\xi\sim \Q}(f(\w_t;\xi) \le F(\w_t)) \\
\nonumber    = &\E_{\xi\sim \Q}\left[I\left(f(\w_t;\xi) \leq F(\w_t)\right)\right]\\
\overset{(\ref{eqn:condition})}{\le} & b A^\theta F(\w_t)
\overset{(c)}{\le} b (\widehat\rho \gamma^{-(t-1)})^\theta, 
\end{align}
where (a) is due to $C>1$; (b) and (c) are due to $F(\w_t)\le \widehat\rho \gamma^{-(t-1)}$.
Using the property in (\ref{eqn:3}), we can bound the expectation of $|\B_t^\rho|$ given $|\B_t|$, i.e.,
\begin{align}\label{eqn:4}
\nonumber    \E_{\xi\sim \Q}[|\B_t^\rho|] =& \E_{\xi_i\sim \Q}\left[\sum_{i=1}^{|\B_t|}I\left( f(\w_t;\xi_i) \le \rho_t\right) \right]\\
\nonumber   = &\sum_{i=1}^{|\B_t|}\E_{\xi_i\sim \Q}\left[I\left( f(\w_t;\xi_i) \le \rho_t\right) \right]\\
\nonumber   = &\sum_{i=1}^{|\B_t|}\text{Pr}_{\xi_i\sim \Q}(f(\w_t;\xi_i) \le \rho_t)\\
\overset{(\ref{eqn:3})}{\le}& |\B_t| b (\widehat\rho \gamma^{-(t-1)})^\theta.
\end{align}
We can bound the size of $\B_t^{\rho}$ by a concentration inequality in Theorem 8 of~\citep{chung2006concentration}, 
with a probability $1 - 2\delta$,
\begin{align}\label{eqn:5}
\nonumber |\B_t^{\rho}| \leq&\E_{\xi\sim \Q}[|\B_t^\rho|]+ \frac{1}{3}\log(1/\delta)+ \sqrt{\frac{1}{9}\log^2(1/\delta) + 2\log(1/\delta) \E_{\xi\sim \Q}[|\B_t^\rho|]} \\
\nonumber \overset{(\ref{eqn:4})}{\le}&|\B_t| b (\widehat\rho \gamma^{-(t-1)})^\theta+ \frac{1}{3}\log(1/\delta)+ \sqrt{\frac{1}{9}\log^2(1/\delta) + 2\log(1/\delta) |\B_t| b (\widehat\rho \gamma^{-(t-1)})^\theta}  \\
\nonumber \le&2|\B_t| b (\widehat\rho \gamma^{-(t-1)})^\theta + 2\log(1/\delta) \\
\nonumber \overset{(\ref{eqn:0:2})}{\le}&2 (1-q)m \gamma^{t-1} (1+\beta)  b (\widehat\rho \gamma^{-(t-1)})^\theta+ 2\log(1/\delta) \\
\leq &2\left((1-q)(1+\beta)  b \widehat\rho^\theta+ \log(1/\delta)\right) m := b_0m,
\end{align}
where $b_0$ is defined in (\ref{eqn:B0}). Therefore, we complete the proof of (\ref{prop:1}).

As indicated in (\ref{eqn:2}) and (\ref{eqn:5}), $|\A_t^{\rho}|$ increases exponentially over iteration while $|\B_t^{\rho}|$ remains upper bounded by a constant. It implies that our dynamically adjusted threshold help us select more and more examples from the unlabeled data that are relevant to the labeled data. In the same time, the number of mistakes we made in the selection process remain to be at most a constant. As a result, our optimization is able to make significant progress by using the selected examples from the unlabeled data. Below we will show that with a high probability, $F(\w_{t+1}) \leq \widehat{\rho}\gamma^{-t}$, the key step of the inductive analysis. 
Finally, we will prove that with a probability $1 -(4t+1)\delta$, we have
\begin{align*}
    F(\w_{t+1}) \leq \widehat{\rho}\gamma^{-t}.
\end{align*}

To this end, using the notation of $\A_t^{\rho}$ and $\B_t^{\rho}$, we can rewrite $\g_t$ as
\begin{align*}
\g_t = (1 - b_t) \g_t^a + b_t \g_t^b,
\end{align*}
where $\g_t^a = \frac{1}{|\A_t^{\rho}|}\sum_{\xi\in \A_t^{\rho}} \nabla f(\w_t;\xi), \quad \g_t^b = \frac{1}{|\B_t^{\rho}|}\sum_{\xi\in \B_t^{\rho}} \nabla f(\w_t;\xi)$, and $b_t $ is the proportion of samples from $\B_t^{\rho}$ that
\begin{align}\label{thm:2:inq:1}
b_t = \frac{|\B_t^{\rho}|}{|\A_t^{\rho}| + |\B_t^{\rho}|} \le \frac{|\B_t^{\rho}|}{|\A_t^{\rho}|} 
\leq \frac{b_0}{a_0}\gamma^{-(t-1)} = b_1\gamma^{-(t-1)}.
\end{align}
Following the classical analysis of non-convex optimization, since $F(\w)$ is $L$-smooth by Assumption~\ref{ass:2}~(ii), we have
\begin{align*}
&F(\w_{t+1}) - F(\w_t) \\
\le& \langle \nabla F(\w_t), \w_{t+1} - \w_t \rangle + \frac{L}{2}\|\w_{t+1} - \w_t\|^2\\
\overset{(a)}{=}& \frac{\eta}{2}\|\nabla F(\w_t) - \g_t\|^2- \frac{\eta}{2}\left(\|\nabla F(\w_t)\|^2 + \left(1 - \eta L\right)\|\g_t\|^2 \right) \\
\overset{(b)}{\leq} & \frac{\eta}{2}\left((1 - b_t)\|\nabla F(\w_t) - \g_t^a\|^2 + b_t\|\nabla F(\w_t) - \g_t^b\|^2\right) - \frac{\eta}{2}\left(\|\nabla F(\w_t)\|^2 + \left(1 - \eta L\right)\|\g_t\|^2 \right) \\
\overset{(c)}{\leq}& \frac{\eta}{2}\left((1 - b_t)\|\nabla F(\w_t) - \g_t^a\|^2 + 4b_t G^2\right) - \eta\mu F(\w_t),
\end{align*}
where (a) follows the update of $\w_{t+1} = \w_t - \eta \g_t$;  (b) is due to the definition of $\g_t$ and the convexity of $\|\cdot\|^2$; (c) follows from Assumption~\ref{ass:3}, Assumption~\ref{ass:2} (i), and $\eta L \le 1$. Since $\E_{\xi\sim\P}[\nabla f(\w;\xi)] = \nabla F(\w)$ and
\begin{align*}
&\E_{\xi\sim\P}\left[\left\|\g_t^a - \nabla F(\w_t)\right\|^2 \right] \\
=& \E_{\xi\sim\P}\left[ \left\|\frac{1}{|\A_t^{\rho}|}\sum_{\xi \in \A_t^{\rho}} \nabla f(\w_t;\xi) - \nabla F(\w_t) \right\|^2\right]\\
=& \frac{1}{|\A_t^{\rho}|^2}\sum_{\xi \in \A_t^{\rho}} \E_{\xi\sim\P}\left[ \left\| \nabla f(\w_t;\xi) - \nabla F(\w_t) \right\|^2\right]\\
\leq& \frac{4G^2}{|\A_t^{\rho}|},~~(\text{Assumption~\ref{ass:2} (i)})
\end{align*}
using concentration inequality in Lemma 4 of \citep{ghadimi2016mini}, 
we have with a probability $1 - 5\delta$,
\begin{align*}
\left\|\g_t^a - \nabla F(\w_t)\right\|^2 \leq \frac{4G^2}{\delta|\A_t^{\rho}|}  \overset{(\ref{prop:1})}{\le} \frac{4G^2}{\delta a_0m\gamma^{t-1}}.
\end{align*}
Using the above bound $\|\g_t^a - \nabla F(\w)\|$, we can further bound $F(\w_{t+1}) - F(\w_t)$ as
\begin{align}\label{thm:2:inq:2}
\nonumber & F(\w_{t+1}) - F(\w_t) \\
\nonumber \le &  \frac{\eta}{2}\left((1 - b_t)\frac{4G^2}{\delta a_0m\gamma^{t-1}} + 4b_t G^2\right) - \eta\mu F(\w_t)\\
\nonumber \overset{(\ref{thm:2:inq:1})}{\le} &  \frac{\eta}{2}\left(\frac{4G^2}{\delta a_0m\gamma^{t-1}} + 4G^2b_1\gamma^{-(t-1)}\right) - \eta\mu F(\w_t)\\
= & 2\eta G^2 \left(\frac{1}{\delta a_0m} + b_1\right) \gamma^{-(t-1)} - \eta\mu F(\w_t)
\end{align}
where $b_1$ is defined in (\ref{eqn:AB}). Hence, we have
\begin{align}\label{thm:2:inq:3}
F(\w_{t+1}) \leq & (1 - \eta\mu)F(\w_t) + 2\eta G^2 \left(\frac{1}{\delta a_0m} + b_1\right) \gamma^{-(t-1)}
\end{align}
Let select 
$\gamma = \frac{1}{1 - \eta\mu/2}$,
we know by using (\ref{inq:induction}) the inequality (\ref{thm:2:inq:3}) will become
\begin{align*}
\nonumber  F(\w_{t+1}) \leq & \gamma(1 - \eta\mu)\widehat\rho \gamma^{-t} + 2\eta \gamma G^2 \left(\frac{1}{\delta a_0m} + b_1\right) \gamma^{-t}\\
\nonumber=&\left(1 - \frac{\eta\mu/2}{1-\eta\mu/2}\right) \widehat\rho \gamma^{-t} \\
&+ \frac{\eta\mu/2}{1-\eta\mu/2} \frac{4G^2}{\mu} \left(\frac{1}{\delta a_0m} + b_1\right) \gamma^{-t}
\end{align*}
By the setting of $ \widehat{\rho}$ 
we have
\begin{align}
F(\w_{t+1}) \leq \widehat{\rho}\gamma^{-t}.
\end{align}
Therefore, we complete the proof by induction.
\end{document}